\title{DCC: Differentiable Cardinality Constraints for Partial Index Tracking}
\author{
    Wooyeon Jo\textsuperscript{\rm 1,3}, 
    Hyunsouk Cho\textsuperscript{\rm 1,2}\thanks{Corresponding Author}
}
\begin{document}

\maketitle

\newcommand{\eg}{{\it e.g.}}%
\newcommand{\ie}{{\it i.e.}}%
\newcommand{\etal}{{\it et al.}}%
\newcommand{\etc}{{\it etc}}%
\newcommand{\com}{\textcolor{red}}
\newcommand{\ours}{\textbf{DCC}}
\newcommand{\oursfpp}{$\textbf{DCC}_{fpp}$}

\begin{abstract}
    Index tracking is a popular passive investment strategy aimed at optimizing portfolios, but fully replicating an index can lead to high transaction costs. To address this, partial replication have been proposed. However, the cardinality constraint renders the problem non-convex, non-differentiable, and often NP-hard, leading to the use of heuristic or neural network-based methods, which can be non-interpretable or have NP-hard complexity. To overcome these limitations, We propose a Differentiable Cardinality Constraint (\ours) for index tracking and introduce a floating-point precision-aware method (\oursfpp) to address implementation issues. We theoretically prove our methods calculate cardinality accurately and enforce actual cardinality with polynomial time complexity. We propose the range of the hyperparameter $a$ ensures that \oursfpp\ has no error in real implementations, based on theoretical proof and experiment. Our method applied to mathematical method outperforms baseline methods across various datasets, demonstrating the effectiveness of the identified hyperparameter $a$.
\end{abstract}

%

\section{Introduction}

\quad Index tracking, particularly through full replication, is one of the most widely used strategies in portfolio optimization. This approach constructs a portfolio that mimics a specific market index by including all constituent stocks with corresponding weights. Full replication can be effectively solved as a basic regression problem using mathematical optimization techniques, enabling the efficient and precise portfolio construction. However, this method assigns continuous weights to all stocks in the portfolio, leading to significant transaction costs—a critical challenge in real-world investment scenarios. To mitigate these costs, partial replication has been proposed \cite{partial}, \cite{survey:partial}, where only a subset of stocks is assigned weights, reducing the overall number of transactions. Partial replication extends full replication by  incorporating a cardinality constraint to limit the number of stocks.

Cardinality constraints, integral to partial replication, exhibit several notable technical challenges~\cite{cardinality, NP_hard}:
i) Discreteness: Cardinality constraints enforce a limit on the number of selected stocks, resulting in a discrete solution space, unlike the continuous one encountered in full replication.
ii) Combinatorial Complexity: These constraints give rise to a combinatorial optimization problem, where all possible combinations must be considered. Their independent and non-continuous nature complicates to reformulate the problem into a form that can be solved using traditional mathematical optimization techniques, such as those requiring linearity, convexity, or differentiability.
iii) Computational Complexity: Finding a solution that satisfies the cardinality constraint is classified as an NP-hard problem, characterized by high computational complexity, making it difficult to identify efficient solutions.
Due to these inherent characteristics, traditional mathematical optimization approaches, which were effective for solving full replication problems, struggle with partial replication. Consequently, heuristic methods \cite{heuristic-evolutionary,clustering,meta-heuristic,meta-heuristic-search,SNN} have been proposed to address partial replication. However, these heuristic approaches have significant drawbacks, including the non-interpretability of some solution processes and the persistence of high complexity.

To overcome these limitations, it would be advantageous to transform cardinality constraints into a form that can be tackled using mathematical optimization techniques. Thus, we propose the Differentiable Cardinality Constraint (\ours), which is not only adaptable to mathematical optimization techniques but also ensures the enforcement of actual cardinality constraints.
In summary, our contributions are as follows:
\begin{enumerate}
    \item We propose \ours, applicable to any optimization algorithm handling differentiable constraints, particularly using the Lagrangian multiplier method for partial replication.
    \item  To address implementation challenges, we introduce a floating-point precision-aware variant, \oursfpp, ensuring accurate enforcement of cardinality constraints.
    \item We establish conditions for the constant $a$ in \oursfpp, providing accurate cardinality calculations and constraint enforcement.
    \item We validate \oursfpp's performance in partial replication using the SLSQP method, demonstrating improved results on real market data and yielding more precise, interpretable solutions within polynomial time.
\end{enumerate}

\section{Related Works}

\quad Full replication is a passive investment strategy in portfolio optimization, where objective is to minimize the tracking error between a target index and the portfolio index, which can be formulated as a constrained regression problem. This problem can be efficiently solved using mathematical optimization techniques. Specifically, the constraints in full replication include the sum-to-one constraint, where the sum of portfolio weights equals one, and the non-negativity constraint, ensuring that each weight is non-negative. Both constraints are linear, making Quadratic Programming (QP) an effective method for efficiently solving full replication problems, as demonstrated in various studies~\cite{markowitz2,markowitz,convex}.
Furthermore, since these constraints can also be expressed in differentiable forms, full replication can be solved using Lagrangian multipliers~\cite{lagrangian,lagrange}. These mathematical optimization techniques are easily implemented using libraries such as CVXPY~\cite{cvpxy} or SciPy~\cite{scipy}, which efficiently find precise solutions. Since these methods follow well-established mathematical procedures, the resulting portfolio solutions are interpretable, as the clear objective functions and explicit constraints make it easy to understand how each decision impacts the final outcome. However, the cardinality constraint is neither linear nor differentiable, making it challenging to solve using mathematical optimization methods. 

To address this problem, heuristic approaches have been employed to address partial replication problems. Heuristic methods such as search algorithms \cite{meta-heuristic-search}, which iteratively explore different combinations of stocks to identify those that optimize the portfolio, and meta-heuristic approaches, including evolutionary algorithms~\cite{heuristic-evolutionary,meta-heuristic}, have been used. Additionally, clustering methods~\cite{clustering} have also been employed to select optimal subsets of stocks, effectively reducing the portfolio size while attempting to maintain tracking accuracy. 
Despite the practical utility of these heuristic methods, they come with inherent limitations. The approximate nature of heuristic solutions means they may find suboptimal solution, and the large search space involved in these problems introduces significant computational complexity. Thus, recently, \cite{SNN} have proposed the use of neural network-based approaches for partial replication, employing reparameterization techniques to transform unconstrained parameters into forms that satisfy the cardinality constraint. However, these neural network approaches often function as black-box models, obscuring the interpretability of the solutions and the intermediate steps involved.

The limitations of heuristic and neural network approaches underscore the need for a mathematical optimization approach to solve the partial replication problem efficiently. Traditional methods of handling the cardinality constraint, such as iteratively applying full replication and selecting the top $K$ stocks or excluding $N-K$ stocks with the lowest weights, can satisfy the constraint but at the cost of increased complexity and inefficiency. This approach is straightforward but often yields suboptimal results due to its iterative nature and lack of optimization techniques.
Unlike these previous methods, we tackle the partial replication problem by proposing the Differentiable Cardinality Constraint (\ours), which transforms the cardinality constraint into a differentiable form. This innovation allows it to be directly integrated into mathematical optimization frameworks, enabling efficient and precise resolution of the problem in polynomial time, while still producing interpretable solutions.

\section{Preliminaries}
\quad Before introducing our Differentiable Cardinality Constraint (\ours), it is essential to formally define the index tracking and the cardinality constraint associated with it.

\subsection{Full Replication}
Traditional index tracking (full replication) involves constructing a portfolio to minimize the difference between the market index and the portfolio index, i.e. tracking error. 
Minimizing the tracking error is a straightforward regression problem when dealing with $N$ stocks over a duration $D$.
The objective function is
$\min\|X\mathbf{w} - \mathbf{y}\|^2_2$
where $X \in \mathbb{R}^{D \times N}$ is the daily return of stocks and $\textbf{w} \in [0,1]^N$ such that $\mathbf{w} = [w_1 w_2 \ldots w_N]^T$ is a weight vector of portfolio. $w_i$ is the weight of $i$-th stock and $\textbf{y} \in \mathbb{R}^D $ is the daily target index. 

Moreover, the portfolio must satisfy straightforward constraints: each stock should have a non-negative weight, and the sum of all weights must equal one. Then, we can define the full replication problem as:
\begin{equation}
\begin{aligned}
     &\min_{\textbf{w}} \quad \|X\mathbf{w} - \mathbf{y}\|^2_2 \\
    \text{subject to:}&\quad  w_i \geq 0 \ \ \forall i ,
     \quad \sum_{i=1}^{N}w_i = 1 
\end{aligned}
\end{equation}
However, full replication assigns continuous weights to all stocks, which leads to high transaction costs. Therefore, a cardinality constraint, which limits the number of stocks in the portfolio, is necessary to mitigate these costs.

\subsection{Partial Replication}
The partial replication ensures that the portfolio's cardinality, calculated through a specific function, does not exceed a given value $K$. To enforce this constraint, we first define the function that calculates the portfolio's cardinality. Let $w_i$ represent the weight of the $i$-th stock in the portfolio. Then cardinality function is defined using a binary function, $b(w_i)$, that assigns a value of 0 if a portfolio weight is zero, and 1 if the weight is greater than zero. By summing the binary function values across all weights in the portfolio, we can calculate the portfolio's cardinality. 
 Partial replication extends the full replication approach by incorporating a cardinality constraint. Using the cardinality function, partial replication can be expressed as follows:
 \begin{equation}
\begin{aligned}
    &\min_{\textbf{w}} \quad \|X\mathbf{w} - \mathbf{y}\|^2_2 \\
    \text{subject to:}&\quad  w_i \geq 0 \ \ \forall i ,
     \quad \sum_{i=1}^{N}w_i = 1  \\
     C(\textbf{w}) = \sum_{i=1}^{N}b(w_i)& \leq K, \ \  b(w_i)=
    \begin{cases} 0,&(w_i=0) \\
                  1,&(w_i>0) 
    \end{cases}
\end{aligned}
\end{equation}
where $C(\textbf{w})$ is the function for calculating cardinality of the portfolio weight $\textbf{w}$ and $K$ is integer such that $K<N$. 
\\ \\
\quad To solve the partial replication problem as a mathematical optimization problem, it is essential to express the cardinality constraint in a form that can be handled by mathematical optimization techniques, particularly in a differentiable manner. Thus we introduct the Differentiable Cardinality Constraint (\ours) in Section 4. And the \ours\ must satisfy the following properties with differentiability:
\begin{itemize}
    \item \textbf{Accuracy}: The cardinality function, $C(\textbf{w})$, in the \ours\ should convert each weight value into the corresponding integer value.
    \item \textbf{Assurance}: The \ours\ must guarantee the limit of the number of selected stocks ($K$).
\end{itemize}

\section{Differentiable Cardinality Constraint}

\begin{figure}[t]
\centering
\includegraphics[width=1\columnwidth]{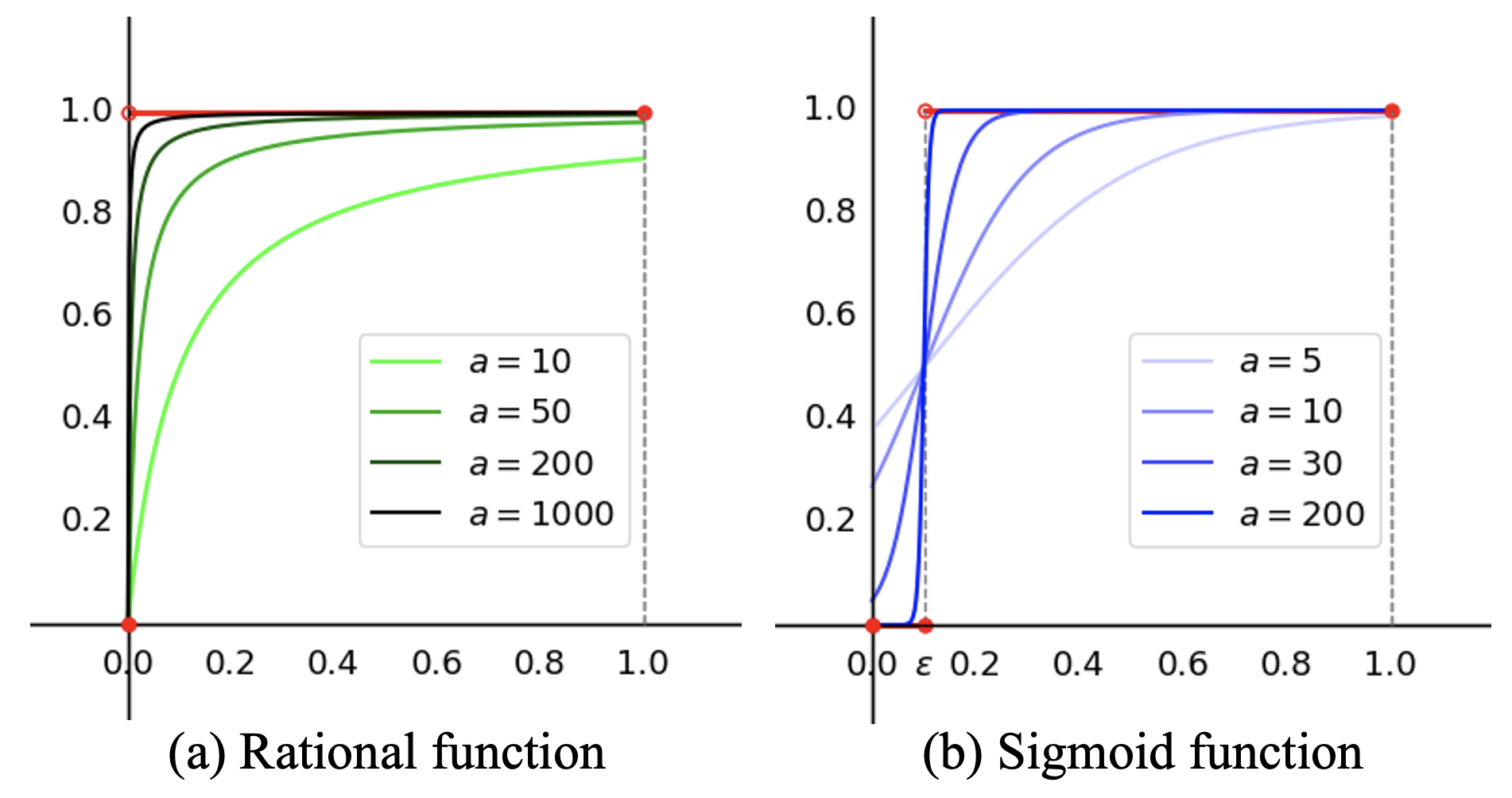}
\caption{(a) red: binary function, $b(w_i)$. green lines: rational approximated function of $b(w_i)$. (b) red: binary function with cutoff threshold, $b_{fpp}(w_i)$. blue lines: sigmoid approximated function of $b_{fpp}(w_i)$.}
\label{fig_approx_bi}
\end{figure}

\subsection{Rational Function Approach}
Defining the cardinality constraint requires a function that calculates the portfolio's cardinality, $C(\textbf{w})$. This function can be expressed as the summation of a binary function $b$. However, as illustrated in Figure~\ref{fig_approx_bi} (a) (red), this binary function is discontinuous and non-differentiable, rendering $C(\textbf{w})$, and cardinality constraints are non-differentiable as well. To address this, we approximate the binary function with a differentiable alternative, allowing $C(\textbf{w})$ and the cardinality constraint to be expressed in a differentiable form under two properties in Preliminaries (Will be discussed in Section 4.2). We utilize the following rational function to approximate the binary function:
\begin{equation}
\begin{aligned}
    & \Tilde{b}(w_i) = 1 - \frac{1}{a\cdot w_i + 1}, \quad a: \text{constant}
\end{aligned}
\end{equation}
The graph of the $\Tilde{b}(w_i)$ function is shown in Figure~\ref{fig_approx_bi} (a). This rational function $\Tilde{b}(w_i)$ is differentiable for all weight in $[0,1]$, and it passes through the origin and approaches $\Tilde{b}=1$ as an asymptote, so that it takes the value of 0 when $w_i$ is 0 and approaches 1 for $w_1$ greater than 0. Here, the constant $a$ can be arbitrarily chosen, and increasing the value of $a$ allows $\Tilde{b}(w_i)$ to approximate binary function $b(w_i)$ more closely (See Figure~\ref{fig_approx_bi} (a) (green lines)). Therefore, selecting a very large value for $a$ is advantageous. Furthermore, the value of $a$ in the approximation function remains independent of the portfolio weight or the number of stocks, thus incurring no additional computational cost or execution time as $a$ increases. 

Using $\Tilde{b}(w_i)$, the function for calculating cardinality of a portfolio can be approximated. Since $\widetilde{C}(\mathbf{w})$ is composed of differentiable functions of each variable $w_i$, it is also a differentiable $N$th-order function. Therefore, we can get the Differentiable Cardinality Constraint (\ours) using $\widetilde{C}(\mathbf{w})$:
\begin{equation}
\begin{aligned}
    \widetilde{C}(\mathbf{w})
    =\sum_{i=1}^{N} \Tilde{b}(w_i) 
    = \sum_{i=1}^{N} \left (1 - \frac{1}{a*w_i + 1} \right ) \leq K
\end{aligned}
\end{equation}


However, when selecting stocks, portfolio typically sets a weight cutoff threshold. This means that instead of strictly counting weights as 0 when they are exactly zero, the binary function should count a weight as 0 if it is below the cutoff threshold, and as 1 if it is above the threshold. This adjustment accounts for floating-point precision and requires a new binary function that incorporates the cutoff threshold.

\subsection{Sigmoid Function Approach (\oursfpp)}
\subsubsection{Cardinality Constraint with Cutoff Threshold Consideration}
As discussed, due to the floating-point precision issues from cutoff threshold, we redefine the cardinality constraint considering the cutoff threshold of portfolio weights like this:
\begin{equation}
\begin{aligned}
    C_{fpp}(\mathbf{w}) \ = \ 
    \sum_{i=1}^{N}b_{fpp}(w_i) \leq K, \\
    \text{where } b_{fpp}(w_i) = 
        \begin{cases} 0, & \text{if } 0 \leq w_i < \epsilon \\
                      1, & \text{if } w_i \geq \epsilon
        \end{cases}
\end{aligned}
\end{equation}
The graph of the redefined binary function is shown in Figure~\ref{fig_approx_bi} (b) (red). Here, $\epsilon$ represents a small cutoff threshold of portfolio weights. Therefore, the redefined cardinality function means that if a weight is less than $\epsilon$, it is counted as zero, and if it is greater than $\epsilon$, it is counted as one.
The redefined binary function $b_{fpp}$ from the $C_{fpp}$ remains a non-differentiability. We approximate this again to make it differentiable. However, we can no longer approximate the binary function using a rational function as before. Instead, we transform the sigmoid function to preserve the meaning of the $b_{fpp}(w_i)$ and make it differentiable as follows:
\begin{equation}
\begin{aligned}
    \tilde{b}_{fpp}(w_i) = \frac{1}{1 + e^{-a(w_i-\epsilon)}}, \quad a: \text{constant}
\end{aligned}
\label{sigmoid function for binary}
\end{equation}
See Figure~\ref{fig_approx_bi} (b) (blue lines). As shown in the graph, a larger value of $a$ results in a closer approximation to $b_{fpp}(w_i)$. Since $a$ is a simple constant (for the same reasons as before), choosing a large $a$ does not affect the problem's complexity or execution time. This approximated binary function is differentiable and has an inflection point at $w_i=\epsilon$. Additionally, when $a$ is set sufficiently large, the function has an asymptote at $\tilde{b}_{fpp}=1$ for weights greater than $\epsilon$ and an asymptote at $\tilde{b}_{fpp}=0$ for weights less than $\epsilon$.  Although $\tilde{b}_{fpp}(w_i)$ is 0.5 when $w_i=\epsilon$, the floating-point precision issue means that weights are rarely exactly $\epsilon$. Even if they are, the cardinality constraint is still ensured. This will be discussed further in the next section. To summarize, if a weight is greater than $\epsilon$, $\tilde{b}_{fpp}(w_i)$ is close to 1; if it is less than $\epsilon$, $\tilde{b}_{fpp}(w_i)$ is close to 0.

Similarly, the approximated cardinality function can be defined using the approximated binary function. Since $\widetilde{C}_{fpp}(\mathbf{w})$ is an $N$-th degree function composed of differentiable terms with respect to each variable $w_i$, the approximated cardinality function is also differentiable. Thus Differentiable Cardinality Constraint for floating-point precision (\oursfpp) can be written as follows:
\begin{equation}
\begin{aligned}
    \widetilde{C}_{fpp}(\mathbf{w})
    = \sum_{i=1}^{N} \tilde{b}_{fpp}(w_i)=\sum_{i=1}^{N} \frac{1}{1 + e^{-a(w_i-\epsilon)}} \leq K 
\end{aligned}
\end{equation}

\subsubsection{Conditions for Accurate Cardinality Calculation}
In the Preliminaries, one of the key properties that the \oursfpp\ must satisfy is the accurate calculation of the portfolio’s cardinality. Achieving this accurate calculation relies on the proper definition of the cardinality function, which, in turn, depends on the binary function used within it. The ability of the cardinality function to accurately reflect the true cardinality is heavily influenced by the value of the constant $a$ used in defining the binary function.
Therefore, we establish conditions for the constant $a$ that ensure the cardinality function correctly computes the portfolio’s cardinality. 

If the value of $a$ is too small, it may count values much larger than zero even when the weight is zero, or conversely, it may fail to count exactly 1 when the weight is 1. In the first case, this could lead to a situation where cardinality is calculated for all weights, regardless of whether they actually contribute to the portfolio. Therefore, to ensure accurate cardinality calculation, the value of $a$ must at least be set such that it counts 0 when the weight is zero and confidently counts 1 when the weight is 1. However, since the $\tilde{b}_{fpp}$ does not exactly take the values of 0 and 1 but instead approaches $b=0$ and $b=1$ asymptotically, we consider a bounded condition using the same cutoff threshold value $\epsilon$ as mentioned in Section 4.2. We establish the following minimum conditions:
\begin{itemize}
    \item $C_0$ : $ w_i=0 \ \forall{i} \in \{1,\cdots, N\}\Rightarrow \sum^N_{i=1}{w_i}\leq \epsilon$
    \item $C_1$ : $w_i=1 \ \forall{i} \in \{1,\cdots, N\}  \Rightarrow \sum^N_{i=1}{w_i}\geq N-\epsilon$
\end{itemize}

\subsubsection{Conditions for Assurance of the \oursfpp}

\newtheorem{theorem}{Theorem}
\newtheorem{lemma}{Lemma}

Similarly, we must ensure the second property of \oursfpp, Assurance. This can be confirmed by verifying that satisfying \oursfpp\ always guarantees the actual cardinality constraints. 
To prove that our \oursfpp\ ensures the actual cardinality constraint, we need to show the following:
\begin{align*}
    \text{If} \ \sum_{i=1}^N \tilde{b}_{fpp}(w_i) \leq K, \quad  
    \text{then} \ \sum_{i=1}^N b_{fpp}(w_i) \leq K
\end{align*}
Then we present the following theorem:
\begin{theorem} The Assurance of the cardinality constraint
    \begin{align*}
        \text{If }\ N \cdot err < 1 \quad \text{and} \quad \sum_{i=1}^N &\tilde{b}_{fpp}(w_i) \leq K,\\
        &\text{then }\ \sum_{i=1}^N b_{fpp}(w_i) \leq K 
    \end{align*}
$N$ is the number of stocks, $e$ is constant in Lemma 1 (You can see in Appendix), and $K$ is integer such that $K<N$.
\end{theorem}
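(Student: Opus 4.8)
The plan is a direct proof whose whole force comes from integrality. Since $C_{fpp}(\mathbf{w})=\sum_{i=1}^N b_{fpp}(w_i)$ is by construction a non-negative integer, it suffices to prove the strict inequality $C_{fpp}(\mathbf{w}) < K+1$; the desired conclusion $C_{fpp}(\mathbf{w}) \le K$ then follows at once because no integer lies strictly between $K$ and $K+1$. Thus the entire task reduces to controlling the gap between the true cardinality $C_{fpp}$ and its differentiable surrogate $\widetilde{C}_{fpp}$.

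First I would write the exact identity $C_{fpp}(\mathbf{w}) = \widetilde{C}_{fpp}(\mathbf{w}) + \sum_{i=1}^N\bigl(b_{fpp}(w_i)-\tilde{b}_{fpp}(w_i)\bigr)$, which holds termwise since $C_{fpp}=\sum_i b_{fpp}(w_i)$ and $\widetilde{C}_{fpp}=\sum_i \tilde{b}_{fpp}(w_i)$. Using the hypothesis $\widetilde{C}_{fpp}(\mathbf{w}) \le K$, the problem becomes bounding the accumulated approximation error $\sum_i\bigl(b_{fpp}(w_i)-\tilde{b}_{fpp}(w_i)\bigr)$ from above. Here I would invoke Lemma~1, which (under conditions $C_0$ and $C_1$, i.e. a suitably chosen constant $a$) supplies the uniform per-term bound $b_{fpp}(w_i)-\tilde{b}_{fpp}(w_i) \le err$: for a weight below the cutoff ($w_i<\epsilon$) the signed error $-\tilde{b}_{fpp}(w_i)$ is non-positive and trivially bounded, while for a weight at or above the cutoff ($w_i\ge\epsilon$) it equals $1-\tilde{b}_{fpp}(w_i)$, precisely the quantity Lemma~1 controls by $err$.

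Summing the $N$ per-term bounds then gives $C_{fpp}(\mathbf{w}) \le \widetilde{C}_{fpp}(\mathbf{w}) + N\cdot err \le K + N\cdot err$, and invoking the standing hypothesis $N\cdot err < 1$ yields $C_{fpp}(\mathbf{w}) < K+1$; integrality closes the argument as noted above. I would present this as a single short chain of inequalities, being careful that the constant appearing is exactly the $err$ produced by Lemma~1 rather than a looser ad hoc bound.

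The main obstacle is neither the summation nor the integrality step, both of which are routine, but the uniform per-term bound on $1-\tilde{b}_{fpp}(w_i)$ for $w_i\ge\epsilon$ that Lemma~1 must deliver. The difficulty concentrates at the threshold $w_i=\epsilon$, where $\tilde{b}_{fpp}(\epsilon)=\tfrac12$ and the naive error is as large as $\tfrac12$, far too big for $N\cdot err<1$ once $N$ is moderate. I therefore expect the real content to live in Lemma~1: it must show that, under $C_0$ and $C_1$ (equivalently, a large enough $a$ tied to $\epsilon$ and $N$) together with the sum-to-one structure of admissible weights, the selected weights sit far enough above $\epsilon$ that $1-\tilde{b}_{fpp}(w_i)$ is driven down to the small constant $err$. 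Pinning that constant down and verifying $N\cdot err<1$ is the crux on which the whole theorem rests.
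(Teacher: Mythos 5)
Your proposal follows essentially the same route as the paper: bound the per-weight approximation error by a uniform constant $err$ (the paper's Lemma~1, which takes $err = e = \int_0^1 |b_{fpp}(w)-\tilde b_{fpp}(w)|\,dw$), sum to obtain $\sum_i b_{fpp}(w_i) \le \sum_i \tilde b_{fpp}(w_i) + N\cdot err \le K + N\cdot err < K+1$, and close by integrality of the true cardinality (the paper's Lemma~3). Your worry about the threshold is well placed and is in fact a gap in the paper itself rather than in your argument: the paper's Lemma~1 justifies the uniform per-term bound only by asserting that a pointwise error is dominated by its integral over $[0,1]$, which is not a valid inference and visibly fails at $w_i=\epsilon$, where the error equals $\tfrac{1}{2}$ independently of $a$ while $e\to 0$ as $a\to\infty$ — so both your proof and the paper's rest on the same per-term estimate whose stated justification does not hold at the cutoff.
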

\begin{proof}
The proof of this theorem, along with the necessary lemmas, and related details, can be found in the appendix.
    \label{Theorem_main}
\end{proof}
Since $N$ is fixed value and $err$ is depend on the constant $a$ in Eq.~\ref{sigmoid function for binary}, if we choose $a$ such that $err<\frac{1}{N}$, then our \oursfpp \ will always ensure the cardinality constraint.
Through Theorem~\ref{Theorem_main}, we have proven that our \oursfpp\ guarantees the cardinality constraint. In other words, our proposed \oursfpp\ can effectively solve cardinality problem by applying to some optimization algorithms without calculating $l_0$-norm.
This theorem leads to the derivation of last condition for the hyperparameter $a$ that ensure the accurate enforcement of the cardinality constraint.
\begin{itemize}
    \item $C_2$ : $err\leq \frac{1}{N}$
\end{itemize}

\subsubsection{Complex Analysis}
We applied our proposed \oursfpp\ to Sequential Least Squares Quadratic Programming (SLSQP), a mathematical optimization technique using the Lagrangian multiplier method. The time complexity analysis indicates that the addition of \oursfpp\ to SLSQP maintains the algorithm's polynomial time complexity.
For a detailed complexity analysis and the corresponding algorithm, please refer to the Appendix (see Algorithm~\ref{alg:algorithm}).

\section{Experiments}
\quad In this section, we validate the proposed \oursfpp\ with various dataset in three aspects: 1) We compare index tracking errors to assess the performance of partial replication. 2) We measure the performance of the generated portfolio using commonly used metrics, 3) we compare the runtime of methods to highlight their efficiency. 
\subsection{Experimental Settings}
\subsubsection{Data}
We conduct experiments using the following three market indices:

\begin{enumerate}
    \item \textbf{S\&P 100 Index} 
    : The S\&P 100, i.e., Standard \& Poor's 100, is a highly representative index that focuses on large, blue-chip companies with high liquidity and stability. As the most critical dataset for our experiments, it is often regarded as a more concentrated and definitive representation of the U.S. market's most significant and stable companies.
    \item \textbf{S\&P 500 Index}
    : The S\&P 500 is a broader stock market index that tracks the performance of 500 large companies listed on U.S. stock exchanges. While it provides a comprehensive overview of the U.S. economy, the S\&P 100 is considered a more focused and essential subset within this broader index.
    \item \textbf{KOSPI 100 Index}
    : The KOSPI 100, by the Korea Exchange (KRX), tracks the top 100 large-cap stocks in the Korean market, offering insights into large-cap stock trends in South Korea.
    
\end{enumerate}
For our analysis, we utilized data spanning from January 1, 2018, to April 30, 2023. The data was sourced from Yahoo Finance~\cite{yfinance}.

\subsubsection{Backtesting}
We apply the backtesting method presented in \cite{SNN} to evaluate and compare the tracking performance of each baseline model. The backtesting is conducted using a sliding window technique, where the data period is shifted by one day at a time. On rebalancing days, which are specific days set at regular intervals, the portfolio is adjusted by recalculating and applying new asset weights based on the most recent data. 
On other days, the performance is assessed using the weights fitted on the most recent rebalancing day. In our study, we rebalance the portfolio on a quarterly basis. For each rebalancing, we utilze one year of historical data to obtain the portfolio weights.

\subsubsection{Baselines}
These baselines are chosen to show that our approach can achieve performance comparable to the state-of-the-art (SOTA) methods.

\begin{enumerate}
\item \textbf{Stochastic Neural Network-based Model (SNN)}
: The SNN model is a state-of-the-art for solving partial replication, known for returning a portfolio with high tracking performance in a short time. During model training, reparametrization is used to express parameters with constraints as unconstrained parameters.  It performs well but lacks interpretability due to being a black-box model.

\item \textbf{Forward Selection}
: The Forward Selection approach satisfies the cardinality constraint by performing $K+1$ full replications. Each iteration, the model select the highest weight from the remaining stocks with Sequential Leasts Squares Quadratic Programming (SLSQP) algorithm. This process is repeated $K$ times to select a total of $K$ stocks. SLSQP is performed on the selected $K$ stocks to obtain a portfolio that meets the cardinality constraint.

\item \textbf{ Backward Selection}
: In contrast to forward selection, the Backward selection approach performs full replication and iteratively excludes the stock with the smallest weight. This process is repeated until $K$ stocks remain. After $N-K+1$ iterations, a portfolio that satisfies the cardinality constraint is obtained.
\end{enumerate}
These baseline models are evaluated to demonstrate the efficacy and efficiency of our proposed method in achieving competitive tracking performance while meeting the necessary constraints.

\subsubsection{Ours}
For the fair comparison, we utilize SLSQP with \oursfpp \ instead of \ours \ to resolve precision issues, ensuring accurate cardinality enforcement and efficient tracking.

\subsection{Index Tracking with Cardinality Constraint}

\begin{table}[!t]
\centering
\begin{tblr}{
  column{even} = {c},
  column{3} = {c},
  hlines,
  vline{2-4} = {-}{},
  hline{1,6} = {-}{0.08em},
  hline{2} = {-}{solid,black},
  hline{2} = {2}{-}{solid,black},
}
                  & $\textbf{K=20}$ & $\textbf{K=25}$ & $\textbf{K=30}$ \\
\textbf{forward}  & 8.9069          & 8.2575          & 7.9333          \\
\textbf{backward} & 7.7373          & 7.5871          & 8.6562          \\
\textbf{SNN}      & 5.8007          & 4.0457          & 4.9006          \\
\textbf{\oursfpp}     & \textbf{3.9155} & \textbf{3.5385} & \textbf{2.3922} 
\end{tblr}
\caption{Mean Absolute Error (MAE) between the tracking index and the target index}
\label{table_MAE}
\end{table}
\begin{figure*}[!t]
\centering
\includegraphics[width=0.9 \textwidth]{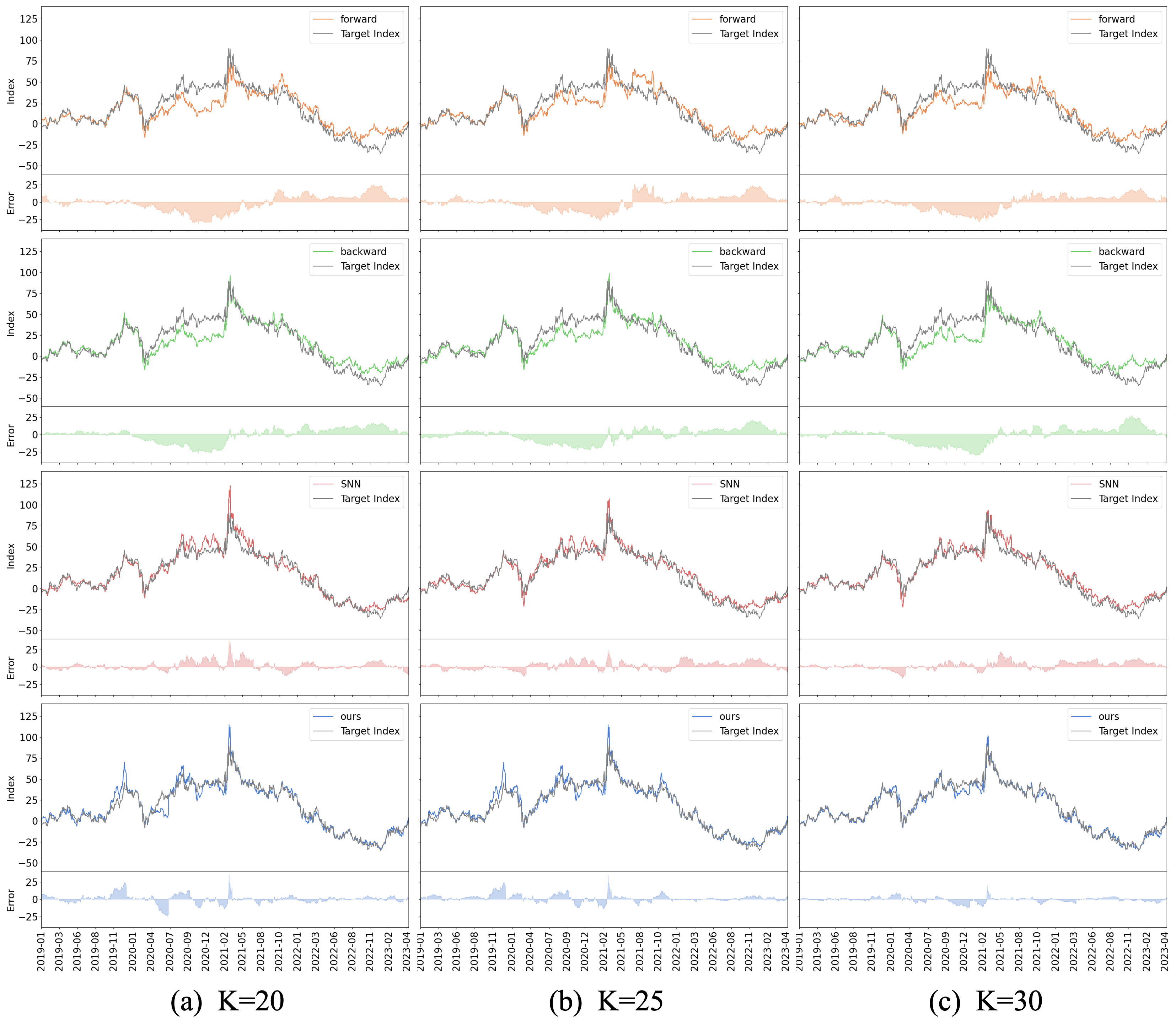}
\caption{Comparison of the tracking indices (orange, green, red, blue) with the S\&P 100 target index (grey) over time. Each row indicates SLSQP with forward selection, SLSQP with backward selection, SNN, and SLSQP with \oursfpp, respectively. Additionally, the graph below shows the absolute error between the tracking index and the target index (full cardainlity).}
\label{fig_sp100}
\end{figure*}

We measure the error between target index and tracking index of partial replication.
To evaluate the performance of index tracking. We first fit the portfolio weights on each rebalancing day (3-month) and calculate the tracking index by taking the weighted sum of the returns of each stock. We then plot this tracking index alongside the target index values to visually assess how well each method tracks the S\&P 100 index. As illustrated in Figure~\ref{fig_sp100} and summarized in Table~\ref{table_MAE}, our \oursfpp\  outperforms the baselines by accurately adhering to the cardinality constraint through a rigorous mathematical procedure.

Our SLSQP with \oursfpp\ outperforms the baselines in tracking performance by precisely adhering to the cardinality constraint. Forward and backward selection methods perform poorly because they separate portfolio fitting from asset selection, often leading to suboptimal solutions regardless of the value of $K$. In contrast, SNN and our method integrate selection and fitting simultaneously, resulting in superior tracking performance. As $K$ decreases, our method effectively reduces the number of selected assets, maintaining strong performance while naturally achieving a slight increase in error, which is expected in scenarios with fewer assets.

\begin{figure}[!t]
\centering
\includegraphics[width=1\columnwidth]{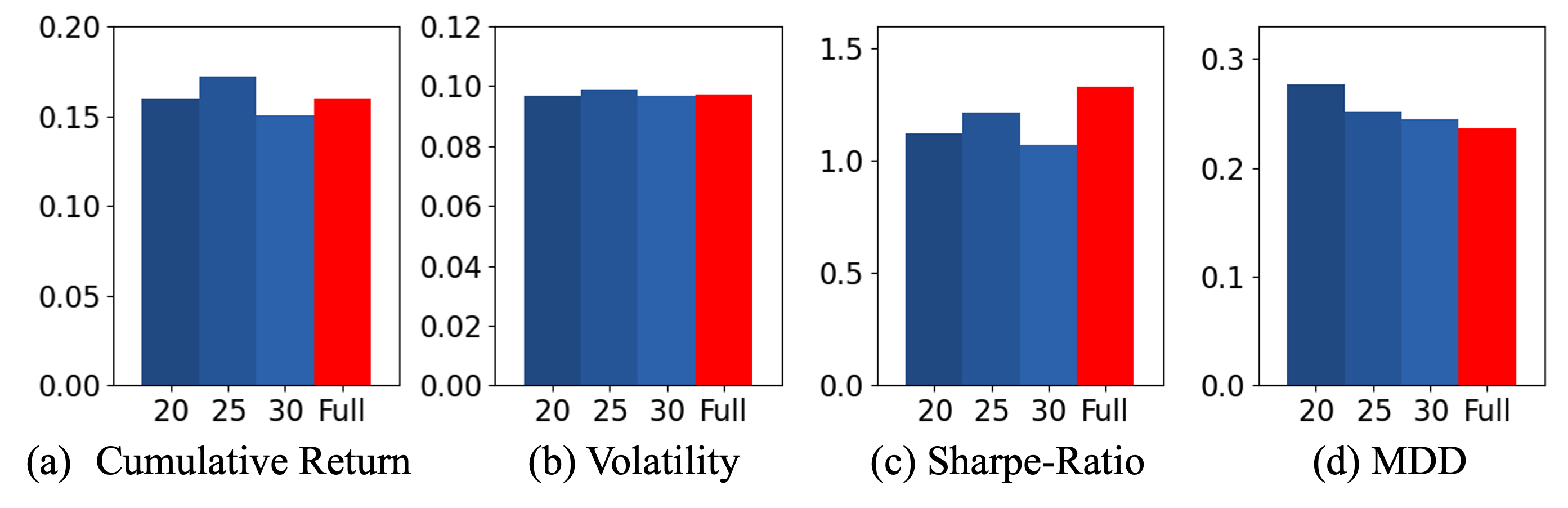}
\caption{Comparison of secondary evaluations between portfolio of \oursfpp\ at different cardinality $K=20, 25$ and $30$ and the portfolio of full replication. }
\label{fig_secondary}
\end{figure}
To assess the effectiveness of the portfolios generated through partial replication using our method, we evaluate them using commonly used metrics: cumulative return, volatility, Sharpe ratio, and maximum drawdown (MDD).
These four evaluation metrics are aggregated as the averages of the values obtained from all portfolios during the backtesting period. Our method demonstrates performance comparable to that of the full replication (See Figure~\ref{fig_secondary}). Despite the cardinality constraints, our portfolio consistently maintains a sharpe-ratio above 1, indicating that it provides favorable returns relative to its risk. Actually, the cumulative return is comparable to full replication, and the volatility shows minimal difference.

\begin{figure}[!t]
\centering
\includegraphics[width=0.99\columnwidth]{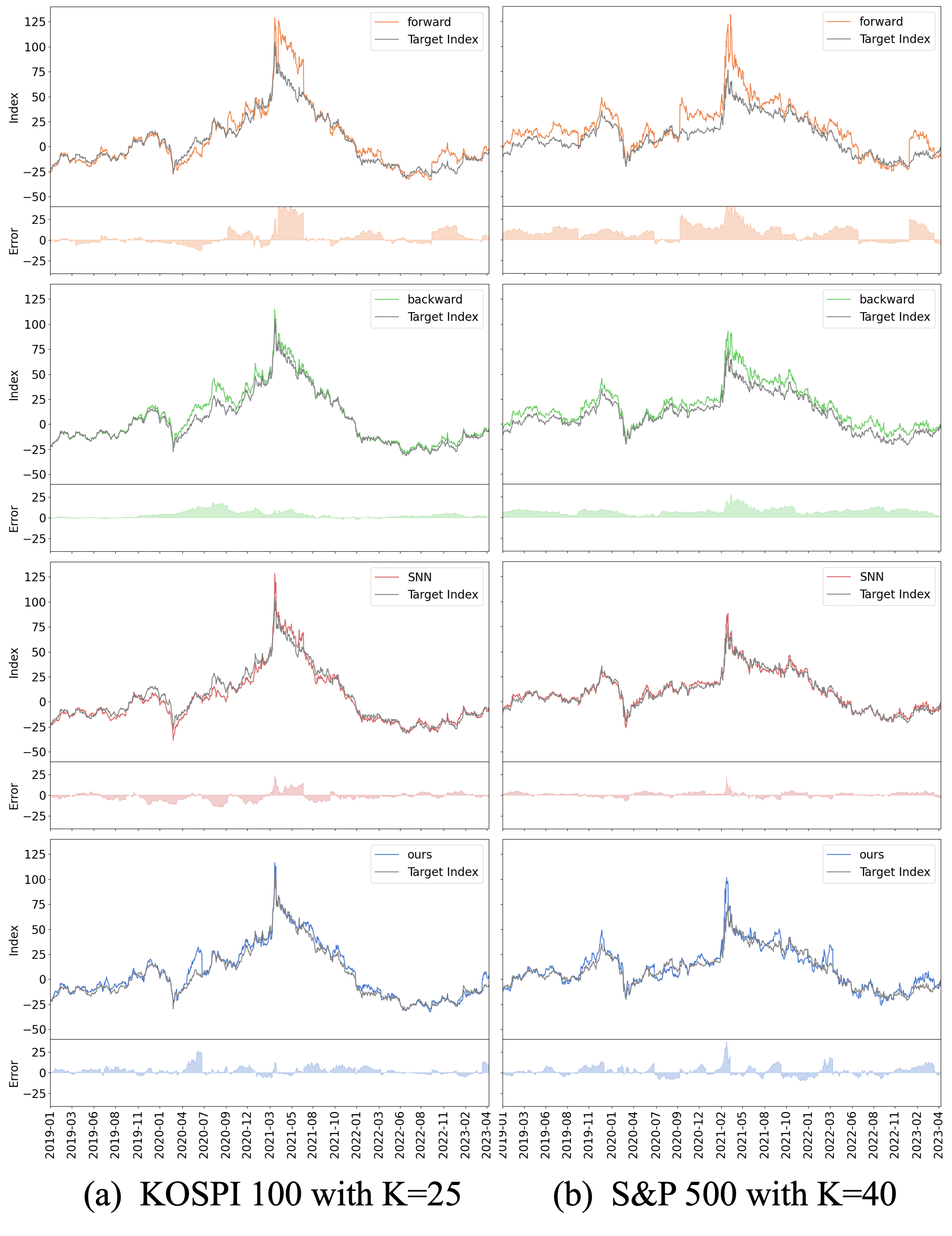}
\caption{Comparison of the tracking indices (orange, green, red, blue) on KOSPI 100 and S\&P 500. Each rows represents forward, backward, SNN, and \oursfpp, respectively.}
\label{fig_kospi100_sp500}
\end{figure}

To evaluate the robustness of the methods, we also conducted experiments about the KOSPI 100 and S\&P 500 indices. The cardinality constraint was set to $K=25$ and $K=40$, respectively, and the tracking results are shown in Figure~\ref{fig_kospi100_sp500}. Our SLSQP with \oursfpp\ effectively tracks the KOSPI 100 index despite its different distribution compared to the S\&P 100, and it also demonstrates highly comparable performance on the S\&P 500, which has five times the number of stocks as the S\&P 100. 
Our method can efficiently handle index tracking with cardinality constraints across any dataset.

\subsection{Efficiency}

\begin{figure}[!t]
\centering
\includegraphics[width=0.9\columnwidth]{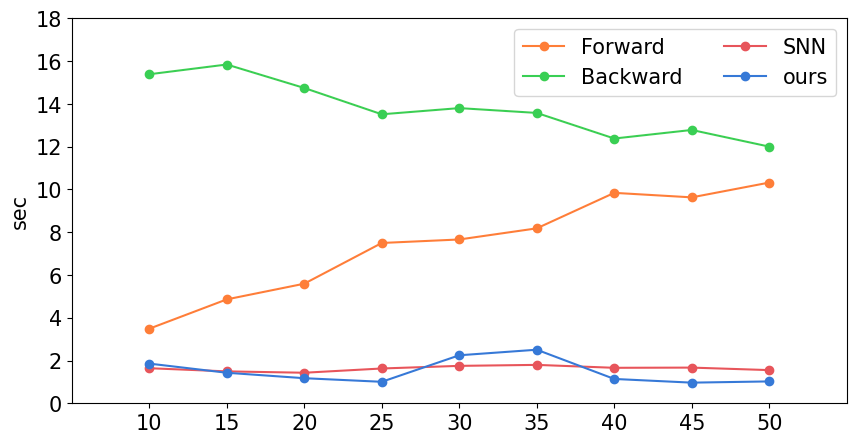}
\caption{Time comparison of each baseline. Each color indicates method: forward selection (orange), backward selection (green), SNN (red), and \oursfpp\ (blue), respectively. The $x$-axis and $y$-axis represent the cardinality $K$ and the average time (sec) taken to optimize. }
\label{fig_time_comparison}
\end{figure}

 To illustrate the efficiency of our approach, we compare the runtime taken for index tracking with a cardinality constraint using forward selection and backward selection. The original cardinality constraint is known as an NP-hard problem. Incorporating a cardinality constraint into mathematical optimization algorithms typically results in exponential complexity for the index tracking solution. By applying our proposed \oursfpp, we can find an exact solution that satisfies the existing cardinality constraint within polynomial time. 

In Figure~\ref{fig_time_comparison}, the $y$-axis represents the average runtime to optimize the weights on rebalancing days. As shown in Figure~\ref{fig_time_comparison}, our method (red) is significantly faster than the forward and backward selection. Specifically, forward selection requires $K+1$ iterations of full replication to select the portfolio weights, while backward selection requires $N-K+1$ iterations. Consequently, as $K$ increases, the runtime for forward selection increases, and the runtime for backward selection decreases. Conversely, as $K$ decreases, the runtime for forward selection decreases, and the runtime for backward selection increases. In contrast, our method maintains a consistently low runtime regardless of whether $K$ increases or decreases. 
This is because our approach directly incorporates the \oursfpp\ as a constraint in the mathematical optimization process, eliminating the need for repetitive full replications while still satisfying the cardinality constraint. 
Also, our method achieves comparable runtime efficiency to the state-of-the-art SNN model, demonstrating its superior efficiency in solving partial replication problems.

\subsection{Hyperparameter Analysis}

\begin{figure}[!t]
\centering
\includegraphics[width=0.9\columnwidth]{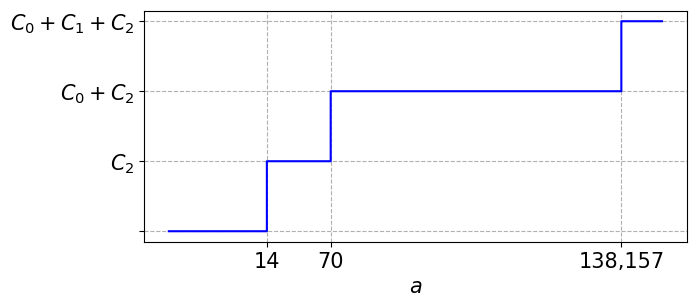}
\caption{Exploring the range of $a$ ($x$-axis) satisfying three Conditions, i.e. $C_0$, $C_1$ and $C_2$.}
\label{fig_ablation}
\end{figure}

We also explored the hyperparameter $a$ which determines the satisfiability of the two essential properties (accuracy and assurance) in \oursfpp.
To identify the appropriate $a$ value satisfying all three conditions ($C_0$, $C_1$ and $C_2$), we analyzed the status of each condition based on the value of $a$.
As shown in Figure~\ref{fig_ablation}, when $a\leq14$, none of the conditions are met. For  $a\geq 14$, $C_0$ is satisfied, $C_2$ is satisfied for $a\geq70$, and finally, $C_1$ is met when $a \geq 138,157$. $a$ should be set to at least 138,157 to satisfy all conditions in Python's 64-bit floating-point precision, ensuring that our \oursfpp\ an accurately calculates the portfolio's cardinality and guarantees the enforcement of the cardinality constraint, regardless of the dataset.

\section{Conclusion}

In this work, we introduced the Differentiable Cardinality Constraint (\ours) and its precision-aware variant (\oursfpp) to address the NP-hard problem of partial replication in index tracking. Our method converts the problem into a differentiable form, enabling efficient solutions using mathematical optimization within polynomial time. Experiments show that \oursfpp\ achieves comparable performance to full replication while adhering to cardinality constraints, outperforming state-of-the-art heuristic methods in both accuracy and efficiency. The robustness and reduced computational complexity of our approach make it highly applicable in real-world portfolio optimization.

\appendix
\section{Differentiable Cardinality Constraint for floating-point precision \oursfpp}

\subsection{Assurance of the \oursfpp}
\quad To demonstrate that our proposed Differential cardinality Constraint for floating-point precision (\oursfpp) can ensure the actual cardinality constraint, we present some theorems and prove a theorem along with the conditions on the constant $a$ used in the approximated binary function.
Before proving this, we define a few necessary Lemmas.

\begin{lemma} (Binary function error boundedness) \\
The error in the approximated binary function value for each weight is bounded by a constant value. 
    \begin{equation*}
    \begin{aligned}
        \forall i &\in \{1,\cdots,N\},\\
        &|b_{fpp}(w_i) - \Tilde{b}_{fpp}(w_i)| \leq \int_{0}^{1} |b_{fpp}(w_i) - \Tilde{b}_{fpp}(w_i)| \, dw_i 
    \end{aligned}
    \end{equation*}
where $w_i \in [0,1]$. 
\end{lemma}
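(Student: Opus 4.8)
The plan is to reduce the claim to a single scalar error profile, since every summand appearing in the eventual cardinality bound has the same shape. First I would write $g(w) := |b_{fpp}(w) - \tilde{b}_{fpp}(w)|$ piecewise straight from the definitions: on $[0,\epsilon)$ we have $b_{fpp}=0$, so $g(w)=\tilde{b}_{fpp}(w)=\frac{1}{1+e^{-a(w-\epsilon)}}$, increasing from $\frac{1}{1+e^{a\epsilon}}$ up to $\tfrac12$; on $[\epsilon,1]$ we have $b_{fpp}=1$, so $g(w)=1-\tilde{b}_{fpp}(w)=\frac{1}{1+e^{a(w-\epsilon)}}$, decreasing from $\tfrac12$ back down to $\frac{1}{1+e^{a(1-\epsilon)}}$. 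Hence $g$ is continuous and nonnegative on $[0,1]$, attains a single peak of height $\tfrac12$ at $w=\epsilon$, and decays exponentially (at rate $a$) away from $\epsilon$. This makes the right-hand side $e := \int_0^1 g(w)\,dw$ a well-defined finite constant depending only on $a$ and $\epsilon$; the substitution $t=a(w-\epsilon)$ yields a closed form with leading behaviour $e \approx \frac{2\ln 2}{a}$, so $e=\Theta(1/a)$ and is small for the large $a$ we intend to use.

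Second, to establish $g(w_i)\le e$ I would bound the pointwise error only at the weights that actually occur. At $w_i=0$ the error is $g(0)=\frac{1}{1+e^{a\epsilon}}$, exponentially small in $a\epsilon$; at any selected weight comfortably above $\epsilon$ the error $\frac{1}{1+e^{a(w_i-\epsilon)}}$ is likewise exponentially small. Because $e=\Theta(1/a)$ decays only polynomially, for $a$ large enough these exponentially small values sit below $e$, giving the inequality on the admissible part of the domain. I would then feed the per-weight bound into the aggregation step $\sum_i b_{fpp}(w_i)\le \sum_i \tilde{b}_{fpp}(w_i)+\sum_i g(w_i)\le K+Ne$, so that $Ne<1$ together with integrality of $\sum_i b_{fpp}(w_i)$ forces $\sum_i b_{fpp}(w_i)\le K$, which is precisely how this lemma is meant to serve Theorem~\ref{Theorem_main}.

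The step I expect to be the genuine obstacle is the neighbourhood of the cutoff $w=\epsilon$. Elementary bounds give only $e=\int_0^1 g \le \max_{[0,1]} g=\tfrac12$, \ie\ the integral can never exceed the peak; so the inequality $g(w_i)\le e$ must fail on a whole band around $\epsilon$ (roughly $|w_i-\epsilon|\lesssim \frac{\ln a}{a}$), where $g$ is close to $\tfrac12$ while $e$ is only $\Theta(1/a)$. The proof therefore cannot be a purely pointwise domination of $g$ by its own integral valid for every $w_i\in[0,1]$; it has to restrict attention to the weights the optimizer can actually return. This is where the floating-point-precision viewpoint of \oursfpp\ must do the real work: one has to argue that realized weights are either exactly $0$ or bounded away from $\epsilon$, so the peak regime is never evaluated. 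Making that ``bounded away from $\epsilon$'' guarantee quantitative enough that the surviving errors fall below $e$ is the crux, and I would expect the cleanest route to be recasting the statement as a bound on the aggregate error consumed by the $Ne<1$ condition, rather than as a literal pointwise inequality.
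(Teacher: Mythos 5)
Your proposal does not actually prove the lemma --- and for good reason: your diagnosis that the pointwise inequality must fail near $w_i=\epsilon$ is correct, and it exposes a genuine flaw in the paper's own argument. The paper's entire proof reads ``Trivial. Each weight error is smaller than the sum of possible errors,'' which is precisely the fallacy you identify: the right-hand side is an integral over an interval of length one, hence an \emph{average} of the error profile $g(w)=|b_{fpp}(w)-\tilde{b}_{fpp}(w)|$, and for any non-constant continuous $g\ge 0$ one has $\int_0^1 g < \max_{[0,1]} g$. Since $g(\epsilon)=\tfrac12$ while $\int_0^1 g = \Theta(1/a)$ (about $2\ln 2/a$), the claimed bound fails on a band of width roughly $\ln(a)/a$ around $\epsilon$ whenever $a>4\ln 2$ --- in particular for the value $a\ge 138{,}157$ the paper recommends. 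The defect propagates downstream: Lemma 2 multiplies the per-weight constant by $N$, and Theorem 1 needs $N\cdot e<1$, so a single weight landing near the cutoff contributes an error close to $\tfrac12$ by itself and the assurance argument does not go through as written.

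Your proposed repair --- restricting attention to weights that are either $0$ or bounded away from $\epsilon$ by some $\delta>0$, so that each realized error is at most $1/(1+e^{a\delta})$, and then choosing $a$ large enough that $N/(1+e^{a\delta})<1$ --- is the right way to salvage the downstream results, but it requires an assumption about the optimizer's output that the paper neither states nor proves; the paper's only gesture in this direction is the informal remark that weights are ``rarely exactly $\epsilon$,'' which covers neither the failure band around $\epsilon$ nor the quantitative bound needed. Your alternative of recasting the statement as an aggregate-error bound is equally sound and arguably cleaner, since only $\sum_i g(w_i)<1$ is ever used. In short: you have not closed the gap, but you have correctly located it, and it is the paper's gap rather than yours.
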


\begin{proof}
Trivial. Each weight error is smaller than the sum of possible errors. The sum of possible errors is a constant, because the definite integral of function (the difference between $b_{fpp}(w_i)$ and $\Tilde{b}_{fpp}(w_i)$) is the same regardless of $i$. 
\end{proof}

\noindent We defined $\int_{0}^{1} |b_{fpp}(w_i) - \Tilde{b}_{fpp}(w_i)| \, dw_i$ as $e$.


\begin{lemma} (Cardinality function error boundedness) \\
The error in the approximated cardinality function value for some weight vectors is bounded by a constant value.
    \begin{equation*}
        \begin{aligned}
            |\sum_{i=1}^N b_{fpp}(w_i) - \sum_{i=1}^N \Tilde{b}_{fpp}(w_i)| \leq N \cdot e
        \end{aligned}
    \end{equation*}
$N$ is the number of stocks, $e$ is the constant in Lemma 1.
\end{lemma}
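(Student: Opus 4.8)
The plan is to prove Lemma 2 directly from Lemma 1 via the triangle inequality. The statement bounds the error of the \emph{cardinality} function (a sum of $N$ approximated binary terms) by $N \cdot e$, where $e$ is the per-weight error constant defined immediately after Lemma 1 as $\int_0^1 |b_{fpp}(w_i) - \tilde{b}_{fpp}(w_i)|\, dw_i$. Since Lemma 1 already establishes that each individual term satisfies $|b_{fpp}(w_i) - \tilde{b}_{fpp}(w_i)| \leq e$, the cardinality error is just a sum of $N$ such per-weight errors, and the uniform bound $e$ on each summand should propagate to the factor $N$ in front.

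First I would rewrite the left-hand side by grouping the two sums into a single summation over the index $i$, using linearity:
\begin{equation*}
    \left| \sum_{i=1}^N b_{fpp}(w_i) - \sum_{i=1}^N \tilde{b}_{fpp}(w_i) \right| = \left| \sum_{i=1}^N \bigl( b_{fpp}(w_i) - \tilde{b}_{fpp}(w_i) \bigr) \right|.
\end{equation*}
Next I would apply the triangle inequality for absolute values to move the modulus inside the sum, obtaining an upper bound of $\sum_{i=1}^N |b_{fpp}(w_i) - \tilde{b}_{fpp}(w_i)|$. Finally, I would invoke Lemma 1 to bound each term of this sum by the constant $e$, so the whole sum is bounded by $\sum_{i=1}^N e = N \cdot e$, which is exactly the claimed inequality. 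This is the complete chain: regroup, triangle inequality, then termwise substitution of the Lemma 1 bound.

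Honestly, I do not expect any serious obstacle here; this is a routine consequence of the triangle inequality combined with the already-proven per-term bound, and the proof should be a couple of lines. The only point worth stating carefully is that $e$ is a genuine constant \emph{independent of} $i$ (as noted after Lemma 1, the definite integral is the same regardless of $i$, since the integrand depends on $w_i$ only through the common functional form of $b_{fpp}$ and $\tilde{b}_{fpp}$), so that replacing each summand by $e$ and collecting $N$ copies legitimately yields the factor $N$ rather than a sum of distinct bounds. If one wanted to be scrupulous, I would remark that this bound is uniform over all admissible weight vectors $\mathbf{w} \in [0,1]^N$, which is precisely what is needed downstream so that Theorem~\ref{Theorem_main} can convert the condition $N \cdot err < 1$ into the assurance guarantee.
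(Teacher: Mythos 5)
Your proposal is correct and follows essentially the same route as the paper's proof: rewrite the difference of sums as a single sum, apply the triangle inequality, and bound each summand by the constant $e$ from Lemma 1 to obtain the factor $N$. Your explicit remark that $e$ is independent of $i$ and that the bound is uniform over $\mathbf{w} \in [0,1]^N$ is a slightly more careful presentation of the same argument.
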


\begin{proof} 
By several obvious mathematical properties and Lemma 1,
    \begin{equation*}
        \begin{aligned}
            &|\sum_{i=1}^N b_{fpp}(w_i) - \sum_{i=1}^N \Tilde{b}_{fpp}(w_i)|
            \\= &|\sum_{i=1}^N (b_{fpp}(w_i) - \Tilde{b}_{fpp}(w_i))| \\
            \text{(by Lemma 1)} \quad
            \leq &\sum_{i=1}^N \int_0^1 | (b_{fpp}(w_i) - \Tilde{b}_{fpp}(w_i))|\, dw_i \\
            = &N \cdot \int_0^1 | (b_{fpp}(w_i) - \Tilde{b}_{fpp}(w_i))|\, dw_i \\
            = &N \cdot e
        \end{aligned}
        \end{equation*}
\end{proof}

\begin{lemma} (Discrete characteristics of cardinality) \\
These two conditions are equivalent. \\
    \begin{equation*}
        \begin{aligned}
            & 1) \quad \sum_{i=1}^N b_{fpp}(w_i) \leq K \\
            & 2) \quad \sum_{i=1}^N b_{fpp}(w_i) < K+1
        \end{aligned}
    \end{equation*}
\\ K is an integer such that $K<N$.
\end{lemma}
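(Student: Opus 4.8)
The plan is to reduce the claimed equivalence to an elementary fact about integers by exploiting the integrality of the left-hand quantity $\sum_{i=1}^N b_{fpp}(w_i)$. First I would observe that, by its definition in the cardinality-with-cutoff constraint, the binary function $b_{fpp}(w_i)$ takes only the two values $0$ and $1$ for every $i$. Consequently, for any admissible weight vector $\mathbf{w}$, the quantity $S = \sum_{i=1}^N b_{fpp}(w_i)$ is a finite sum of integers and is therefore itself a non-negative integer, with $0 \leq S \leq N$. Making this integer-valuedness explicit at the outset is the crux of the argument.

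Next I would invoke the standard property that no integer lies strictly between two consecutive integers $K$ and $K+1$. Since both $S$ and $K$ are integers, the inequalities $S \leq K$ and $S < K+1$ cut out exactly the same set of admissible values of $S$: if $S \leq K$ then trivially $S < K+1$, and conversely, if $S < K+1$ then, because $S$ is an integer, it can be at most $K$, yielding $S \leq K$. This two-sided implication establishes the equivalence of conditions $1)$ and $2)$ directly, with no computation required.

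The only point demanding care — and the reason the hypothesis that $K$ is an integer is stated — is precisely this integrality step: the equivalence would fail verbatim for a real-valued quantity, so the proof hinges entirely on both $S$ and $K$ being integers. There is no genuine obstacle here; in keeping with the trivial proofs of the preceding lemmas, once the integer-valued range of $b_{fpp}$ is noted the remainder of the argument is immediate, and I would present it in a single short paragraph.
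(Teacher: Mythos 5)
Your proposal is correct and follows exactly the same reasoning as the paper, which simply notes that the equivalence holds because $\sum_{i=1}^N b_{fpp}(w_i)$ and $K$ are both integers; you merely spell out the integrality of the sum (each $b_{fpp}(w_i)\in\{0,1\}$) and the no-integer-strictly-between-consecutive-integers step that the paper leaves implicit.
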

\begin{proof}
    Trivial. Since $\sum_{i=1}^N b_{fpp}(w_i)$ and $K$ are integers.
\end{proof}

To prove that our \oursfpp\ ensures the actual cardinality constraint, we need to show the following:
\begin{align*}
    \text{If} \ \sum_{i=1}^N \tilde{b}_{fpp}(w_i) \leq K, \quad  
    \text{then} \ \sum_{i=1}^N b_{fpp}(w_i) \leq K
\end{align*}
Using Lemma 2 and Lemma 3, we can obtain the following theorem:
\begin{theorem} The Assurance of the cardinality constraint
    \begin{align*}
        \text{If }\ N \cdot e < 1 \quad \text{and} \quad \sum_{i=1}^N &\tilde{b}_{fpp}(w_i) \leq K,\\
        &\text{then }\ \sum_{i=1}^N b_{fpp}(w_i) \leq K 
    \end{align*}
$N$ is the number of stocks, $e$ is constant in Lemma 1, and $K$ is integer such that $K<N$.
\end{theorem}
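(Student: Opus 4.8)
The plan is to reduce the target inequality $\sum_{i=1}^N b_{fpp}(w_i) \le K$ to a strict inequality using the integrality of the true cardinality count, and then absorb the approximation gap with the uniform error bound of Lemma 2. Because $\sum_{i=1}^N b_{fpp}(w_i)$ is an integer, Lemma 3 tells me that it suffices to establish the strictly stronger-looking bound $\sum_{i=1}^N b_{fpp}(w_i) < K+1$; the spare unit of room between $K$ and $K+1$ is precisely what the hypothesis $N\cdot e < 1$ is engineered to furnish.

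First I would write the true count as the approximate count plus their signed difference, and pass to the absolute value of that difference:
\begin{align*}
\sum_{i=1}^N b_{fpp}(w_i) &= \sum_{i=1}^N \tilde{b}_{fpp}(w_i) + \left(\sum_{i=1}^N b_{fpp}(w_i) - \sum_{i=1}^N \tilde{b}_{fpp}(w_i)\right) \\
&\le \sum_{i=1}^N \tilde{b}_{fpp}(w_i) + \left|\sum_{i=1}^N b_{fpp}(w_i) - \sum_{i=1}^N \tilde{b}_{fpp}(w_i)\right|.
\end{align*}
Next I would invoke the two standing hypotheses together with Lemma 2: the assumption $\sum_{i=1}^N \tilde{b}_{fpp}(w_i) \le K$ controls the first term, while Lemma 2 bounds the second term by $N\cdot e$, yielding $\sum_{i=1}^N b_{fpp}(w_i) \le K + N\cdot e$. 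Applying $N\cdot e < 1$ then gives $\sum_{i=1}^N b_{fpp}(w_i) < K+1$, and a final appeal to Lemma 3 converts this strict bound back into $\sum_{i=1}^N b_{fpp}(w_i)\le K$, which is the desired Assurance property.

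The chaining above is entirely routine, so I do not expect the theorem statement itself to be the hard part; the genuine content has been pushed down into the lemmas, and that is where I would concentrate my attention. The step most worth scrutinizing is Lemma 1, whose claimed pointwise bound $|b_{fpp}(w_i) - \tilde{b}_{fpp}(w_i)| \le e$, with $e$ the definite integral of the same absolute difference over $[0,1]$, is not automatic: near $w_i=\epsilon$ the pointwise discrepancy between the step function and its sigmoid surrogate approaches $\tfrac{1}{2}$, which can exceed the (typically $O(1/a)$) integral $e$. I would therefore either re-interpret Lemma 1/2 as an aggregate bound and re-derive Lemma 2 directly from the monotone, bounded shape of $\tilde{b}_{fpp}$ relative to $b_{fpp}$, or else verify carefully that $e$ is independent of $i$ and that some admissible $a$ makes $N\cdot e < 1$ (condition $C_2$). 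Granting those facts, the theorem follows immediately from the displayed estimate.
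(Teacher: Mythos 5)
Your proof is essentially identical to the paper's: both bound the true count by $\sum_{i=1}^N \tilde{b}_{fpp}(w_i) + N\cdot e \le K + N\cdot e < K+1$ via Lemma 2 and the hypothesis, then invoke the integrality argument of Lemma 3 to conclude $\sum_{i=1}^N b_{fpp}(w_i)\le K$. Your side observation that the real vulnerability lies in Lemma 1 --- whose pointwise-bounded-by-the-integral claim indeed fails near $w_i=\epsilon$, where the discrepancy approaches $\tfrac{1}{2}$ while the integral is $O(1/a)$ --- is well founded, but that is a criticism of the lemma rather than a difference in the proof of this theorem.
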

\begin{proof}
    \begin{equation*}
    \begin{aligned}
        &\text{Suppose that } N \cdot e < 1 \ \text{ and } \ \sum_{i=1}^N \tilde{b}_{fpp}(w_i) \leq K. \\
        &\text{By Lemma 2, } \ |\sum_{i=1}^N b_{fpp}(w_i) - \sum_{i=1}^N \Tilde{b}_{fpp}(w_i)| \leq N \cdot e. \\
        &\Rightarrow \sum_{i=1}^N \Tilde{b}_{fpp}(w_i) - N \cdot e \leq \sum_{i=1}^N b_{fpp}(w_i) \\
        &\leq \sum_{i=1}^N \Tilde{b}_{fpp}(w_i) + N \cdot e \\
        &\Rightarrow  \sum_{i=1}^N b_{fpp}(w_i) \leq \sum_{i=1}^N \Tilde{b}_{fpp}(w_i) + N \cdot e \leq K + N \cdot e. \  \\ \\
        &\text{By assumption,} \quad K + N \cdot e < K + 1. \\
        &\text{Thus,} \quad \sum_{i=1}^N b_{fpp}(w_i) < K + 1. \\
        &\text{By Lemma 3,} \quad \sum_{i=1}^N b_{fpp}(w_i) \leq K.
    \end{aligned}
    \end{equation*}
    \label{Theorem}
\end{proof}
Since $N$ is fixed value and $e$ is dependent on the constant $a$ in Eq.~\ref{sigmoid function for binary}, if we choose $a$ such that $e<\frac{1}{N}$, then our \oursfpp \ will always ensure the cardinality constraint.
Through Theorem~\ref{Theorem}, we have proven that our \oursfpp\ guarantees the cardinality constraint. In other words, our proposed \oursfpp\ can effectively solve cardinality problem by applying to some optimization algorithms without calculating $l_0$-norm.

\subsection{Complex Analysis}
When applying our proposed \oursfpp\ to the mathematical optimization method, we analyze the time complexity of partial replication using Lagrange multipliers to show that the complexity of the optimization algorithm remains polynomial time. To analyze the time complexity from a computational implementation perspective, we applied the \oursfpp\ to Sequential Least Squares Quadratic Programming (SLSQP), a mathematical optimization technique using the Lagrangian multiplier method. 
First, the algorithm for solving partial replication using the SLSQP, which is detailed in Algorithm~\ref{alg:algorithm} (see Appendix), serves as the foundation for the subsequent complexity analysis.

\begin{enumerate}
    \item Initializing a variable of order $N$ is $O(N)$. 
    \item Defining the Lagrangian function is just for understanding next step. Thus, actually, this step is not executed.
    \item To calculate the $\frac{\partial L}{\partial \textbf{w}}$, we have to calculate $R^TR$ and $R^T\mathbf{y}$. These are $O(DN^2)$ and $O(DN)$, respectively. 
    \item Setting the Karush-Kuhn-Tucker (KKT) conditions and solving these are just $O(1)$.
    \item To solve the system of equations of 3 and 4, we utilize SLSQP solver. SLSQP solver commonly performs matrix operations internally, with a time complexity of $O(N^3)$.
\end{enumerate}
The combined time complexity for each step of the algorithm is as follows:
\begin{itemize}
    \item $O(N)+O(DN^2)+O(DN)+O(1)+O(N^3)=O(N^3)$
\end{itemize}
Therefore, we can conclude that adding \oursfpp\ to SLSQP allows us to solve the problem in polynomial time.

\begin{algorithm}[tb]
\caption{Partial Replication using Lagrangian Multiplier and \oursfpp}
\label{alg:algorithm}
\textbf{Input}: $R \in \mathbb{R}^{D\times N}, \mathbf{y}\in \mathbb{R}^{D}, N\in \mathbb{R}, D\in \mathbb{R}$\\
\textbf{Parameter}: $\lambda\in \mathbb{R}, \boldsymbol{\mu}\in \mathbb{R}^{N}, \nu\in \mathbb{R}$\\
\textbf{Output}: $\mathbf{w}\in [0,1]^N$
\begin{algorithmic}[1] 
\STATE \textbf{Initialize} $\mathbf{w}=[\frac{1}{N}, ..., \frac{1}{N}], \boldsymbol{\mu}=[0,...,0], \nu=0$
\STATE \textbf{Define the Lagrangian Function}: \\ $L(\mathbf{w},\lambda,\boldsymbol{\mu},\nu)=\frac{1}{2} (R\mathbf{w}-\mathbf{y})^T(R\mathbf{w}-\mathbf{y})+\lambda(1-sum(\mathbf{w})-\boldsymbol{\mu}\cdot\mathbf{w}+\nu(\frac{1}{1+e^{-a(\mathbf{w}-\alpha)}}-K)$
\STATE \textbf{Set the partial derivatives of $L$ and them to zero:} \\ 
\quad $\frac{\partial L}{\partial \textbf{w}} = R^T(R\textbf{w}-\mathbf{y})+\lambda-\boldsymbol{\mu}+\nu\frac{-a\cdot e^{-a(\textbf{w}-\alpha)}}{(1+e^{-a(\textbf{w}-\alpha)})^2}=0$\\ 
\quad $\frac{\partial L}{\partial \lambda} = 1-sum(\textbf{w})=0$,
\quad $\frac{\partial L}{\partial \boldsymbol{\mu}} = -\textbf{w}=0$ \\ 
\quad $\frac{\partial L}{\partial \nu} = \frac{1}{1+e^{-a(\textbf{w}-\alpha)}}-K=0$ \\
\STATE \textbf{Set the KKT conditions:} \\ 
\quad $\mu_i\geq0, \nu\geq0, $ $\forall i \in [1,\dots,N]$ \\
\quad $\mu_iw_i=0, $ $\forall i \in [1,\dots,N]$, \ $\nu\frac{-a\cdot e^{-a(\textbf{w}-\alpha)}}{(1+e^{-a(\textbf{w}-\alpha)})^2}=0$
\STATE \textbf{Solve the systems of equations of $3,4$ by using SLSQP solver} \\ 
\end{algorithmic}
\end{algorithm}

\subsection{Approximation Error of Partial Replication}
The approximation error of partial replication compared to full replication can be bounded as follows:\\
Let $N$ be the number of total stocks and $K$ be the integer such that $K<N$. For each $i\in [1,2,...,N]$ and $j\in[1,2,...,K]$.
Let $err_i$ be the maximum error of each stock such that $err_i=max_j|r_i - r_j|\cdot w_i$. Then, we can get the upper bound of the tracking error of partial replication, which is described as the sum of maximum errors, $\sum_i^N{err_i}$.

\section*{Acknowledgements}
This work was supported by the Institute of Information \& Communications Technology Planning \& Evaluation (IITP) grant funded by the Korea government (MSIT) (No.2022-0-00680, Abductive inference framework using omni-data for understanding complex causal relations), the National R\&D Program through the National Research Foundation of Korea (NRF) funded by the Ministry of Science and ICT (RS-2024-00407282), and the Artificial Intelligence Convergence Innovation Human Resources Development (IITP-2025-RS-2023-00255968) funded by the Korea government (MSIT).

\bibliography{aaai25}

\end{document}